\documentclass[11pt,a4paper]{article}

\usepackage[utf8]{inputenc}
\usepackage[T1]{fontenc}
\usepackage{color}
\usepackage[margin=1in]{geometry}
\usepackage{amsmath,amssymb,amsthm,amsfonts}
\usepackage{mathtools}
\usepackage{graphicx}
\usepackage{epstopdf}
\graphicspath{{figures/}}
\usepackage{booktabs}
\usepackage{caption}
\usepackage{subcaption}
\usepackage{placeins}
\usepackage{xcolor}
\usepackage[colorlinks=true,linkcolor=blue,citecolor=blue,urlcolor=blue]{hyperref}
\usepackage[numbers,sort&compress]{natbib}

\newtheorem{theorem}{Theorem}
\newtheorem{proposition}[theorem]{Proposition}

\theoremstyle{definition}

\newtheorem{remark}[theorem]{Remark}

\newcommand{\R}{\mathbb{R}}
\newcommand{\RP}{\mathbb{RP}}
\newcommand{\Sph}{\mathbb{S}}
\newcommand{\norm}[1]{\left\lVert#1\right\rVert}
\newcommand{\PS}{\mathrm{PS}}
\newcommand{\OPS}{\mathrm{OPS}}
\newcommand{\VW}{\mathrm{VW}}
\newcommand{\Sym}{\mathrm{Sym}}
\newcommand{\tr}{\operatorname{tr}}
\newcommand{\edit}[1]{#1}

\title{Multi-View Structure-from-Motion Enables Oriented\\
Projective Shape Analysis in Three Dimensions}

\author{Musab Alamoudi\thanks{Department of Statistics, Florida State University, Tallahassee, FL, USA.}
\and Robert L.\ Paige\thanks{Department of Mathematics and Statistics, Missouri University of Science and Technology, Rolla, MO, USA.}
\and Vic Patrangenaru\thanks{Department of Statistics, Florida State University, Tallahassee, FL, USA.}}

\date{\today}

\begin{document}

\maketitle
\thispagestyle{empty}

\begin{abstract}
Projective shape analysis provides a geometric framework for studying
landmark configurations in digital images acquired by pinhole cameras.
In the classical projective shape (PS) model, three-dimensional
configurations ($k$-ads) are represented as points in $(\RP^3)^q$,
$q=k-5$. A nonparametric test is developed in
\citet{patrangenaru2010}, for this framework, to determine
whether an object matches a design blueprint, with each
configuration reconstructed from a single uncalibrated stereo pair. Such
two-view reconstructions are identified only up to a 3D
projective transformation, which may reverse orientation, so that the
sign-blind PS summary was the only available, before oriented projective
shape (OPS) was recently considered. Multi-view Structure-from-Motion (SfM)
technology removes this obstruction: its bundle adjustment is identified
up to an orientation-preserving projective transformation. In this
paper we revisit a
well-cited three-cube object, from \citet{patrangenaru2010}, with $n=8$
SfM reconstructions built in Agisoft
Metashape Professional~2.3.0, and validate our implementation by
reproducing the published stereo construction from the original data. This
allows us to perform what is to the best of our knowledge the first
three-dimensional OPS analysis, compute its extrinsic
total-variance index and perform statistical inference in this novel
setting. Due to the high concentration of SfM
data, the OPS index is asymptotically one-half the PS index, a structural consequence of concentration rather than a property of
the object. Here our blueprint hypothesis is not rejected
for any of the $q=14$
non-frame landmarks, while the SfM reconstructions are about
$26$ times more concentrated than the stereo ones---a substantial gain
in reconstruction precision. Sample-size, photograph-count, and
frame-ordering analyses support the robustness of these conclusions.

\medskip
\noindent\textbf{Keywords:} projective shape; oriented projective shape;
Structure-from-Motion; Agisoft Metashape; extrinsic mean; pivotal
bootstrap; reproducibility.
\end{abstract}

\clearpage
\setcounter{page}{1}
\section{Introduction}

\subsection{Background and Motivation}

The image of a scene under a pinhole camera is, up to internal
calibration, a central projection \citep{hartleyZisserman2004}.
Statistical shape analysis of three-dimensional configurations recovered
from such images is therefore naturally described with projective
geometry.
\emph{Projective shape} (PS) identifies configurations that differ by a
projective transformation; for $k$ landmarks in general position the
shape space is $(\RP^m)^{k-m-2}$
\citep{mardiaPatrangenaru2005,patrangenaruEllingson2015}. Because the
projective point $[x]\in\RP^m$ identifies the antipodal points $x$
and $-x$ on $\Sph^m$, PS is blind to the front--back orientation of a physical surface.

\citet{patrangenaru2010} provide a nonparametric test of whether the PS of a
manufactured object matches a design blueprint, and illustrated it on a
polyhedral object built from three stacked cubes. This methodology was
subsequently developed in a second installment by \citet{qiu2019}. Here,
each of their eight configurations were
reconstructed from a single uncalibrated \emph{stereo} pair of images.
A fundamental result in computer
vision guarantees that
such a two-view reconstruction recovers the configuration up to a general
projective transformation \citep{faugeras1992,hartley1992}, which is
exactly the ambiguity that PS quotients out. This is what makes PS the
natural framework for stereo data. This paper continues this line of
research by replacing that two-view reconstruction with a multi-view SfM
reconstruction.

More recently, \citet{alamoudi2025} developed \emph{oriented}
projective shape (OPS) analysis, defined on the product of spheres
$(\Sph^m)^q$ rather than $(\RP^m)^q$, which retains the orientation that
PS discards \citep{stolfi1991,choi2022}. OPS requires a \emph{consistent}
sign for
each landmark's unit projective coordinate across the replicates. A single
uncalibrated stereo reconstruction does not provide this, because the
general projective ambiguity can reverse orientation. For this reason
\citet{alamoudi2025} restricted the OPS analysis to the planar case.

\subsection{Aims and Contributions}

This paper has two complementary aims, addressed in the order a reader
would naturally consider them.

\paragraph{Aim 1: Reproduce and Improve PS Analysis with SfM.}
We first ask whether PS analysis of \citet{patrangenaru2010} can be
carried out, and improved, by replacing stereo view with multi-view
Structure-from-Motion (SfM) \citep{schonberger2016}, as implemented in
Agisoft Metashape Professional~2.3.0 \citep{agisoft2025}. We (i) describe
the SfM pipeline and its landmark acquisition; (ii) validate our
implementation by reproducing the published blueprint decision of
\citet{patrangenaru2010} on their own data (Section~\ref{sec:reproduce});
(iii) apply the same PS test to $n=8$ SfM reconstructions of the object;
and (iv) compare the two pipelines on a fair, scale-free footing
(Section~\ref{sec:compare}).

\paragraph{Aim 2: Convert PS to OPS, which SfM Uniquely Enables.}
We then observe that an SfM bundle adjustment is identified only up to a
global \emph{similarity} transformation, which preserves orientation, so
the issues that confined \citet{alamoudi2025} to the plane are no longer
present. We therefore are able to extend the analysis to OPS in three
dimensions (Section~\ref{sec:ops}), provide its extrinsic
total-variance index and inferential methodology, and clarify the precise
relationship between the PS and OPS
indices in the presence of highly concentrated SfM data.

\paragraph{Summary of Findings.}
The blueprint hypothesis is not rejected for any landmark, agreeing with
\citet{patrangenaru2010}. The SfM reconstructions are about $26$ times
more concentrated than the stereo ones and the OPS and PS total-variance
indices coincide up to a factor of two as an algebraic identity. As such
it is seen that the genuine value of OPS is conceptual, in terms of
orientation retention and the detection of mirrored configurations via
the frame-orientation admissibility check, rather than a
difference in the index.
Three sensitivity analyses, based on (i) sample size, (ii) photograph
count and (iii) frame ordering, confirm the robustness of the blueprint
test conclusion.

The paper is organized as follows. Section~2 reviews projective and
oriented projective shape and explains why SfM reconstruction enables
OPS. Section~3 describes the objects involved, the data, and the SfM
pipeline.
Section~4 presents the PS blueprint analysis, its validation on the
2010 data, and the stereo--SfM comparison; Section~5 develops the OPS
analysis and the PS--OPS identity. Section~6 reports robustness and
sensitivity analyses, and Section~7 provides conclusions.

\section{Projective Shape and Reconstruction}
\label{sec:foundations}

\subsection{Projective Frames and Coordinates}

Here points $x\in\R^{m+1}\setminus\{0\}$ are written in homogeneous form, with
the affine embedding $h(u)=[\tilde u]$, $\tilde u=(u^\top,1)^\top$. A
projective frame in $\RP^m$ is an ordered $(m+2)$-tuple of points in
general position. Here $m=3$ and the frame chosen for our work is $F=\{8,12,17,18,19\}$,
which is identical to that of \citet{patrangenaru2010}. Writing
$U_3=[\tilde p_8\mid\tilde p_{12}\mid\tilde p_{17}\mid\tilde p_{18}]$ and
$v_5=U_3^{-1}\tilde p_{19}$, the projective coordinate of a non-frame
landmark $p=[\tilde u]$ is the unit vector
\begin{equation}
\label{eq:proj-coord}
  z(u)=\frac{y(u)}{\norm{y(u)}},\qquad
  y^j(u)=\frac{(U_3^{-1}\tilde u)^j}{v_5^{\,j}},\quad j=1,\dots,M,
\end{equation}
where the superscript $j$ denotes the $j$-th component of a vector.

This coordinate is well defined when $U_3$ is invertible and $v_5$ has
all non-zero entries \citep[Remark~2.1]{patrangenaru2010}. These
coordinates are
invariant under any projective transformation of the configuration, so
they depend on the projective shape alone, not on the camera placement or
the reconstruction scheme.

\subsection{Projective versus Oriented Projective Shape}

In PS the class $[z]\in\RP^m$ identifies $z$ with $-z$; the canonical
embedding is the Veronese--Whitney (VW) map $j_{\VW}([z])=zz^\top$ into
the space $\Sym(M)$ of $M\times M$ symmetric matrices, which is invariant under sign
reversal \citep{bhattacharyaPatrangenaru2005,patrangenaruEllingson2015}.
In OPS the class is the oriented ray
$\overrightarrow{[z]}\in\Sph^m$ and the embedding is the inclusion
$\Sph^m\hookrightarrow\R^{m+1}$ \citep{stolfi1991,choi2022,alamoudi2025},
which
retains orientation. For OPS to be well defined, $z(u)$ must carry a sign
that is consistent across replicates.

\subsection{Why SfM Reconstruction Enables OPS}
\label{sec:why-sfm}

A two-view stereo reconstruction is identified only up to a general
projective transformation \citep{faugeras1992,hartley1992}; since this
group includes orientation-reversing maps, the sign of $z(u)$ may differ
between replicates, and OPS is not well posed. This is the issue
noted by \citet{alamoudi2025}.

Multi-view SfM provides a single global bundle adjustment over all images,
jointly estimating camera intrinsics, camera poses, and 3D landmark
positions \citep{schonberger2016}. The reconstruction is identified only
up to a global \emph{similarity} transformation (rotation, translation,
positive scale). A similarity preserves orientation, so the recovered
point cloud carries a consistent front--back orientation across
replicates. This is exactly the property OPS requires. This is also the
central methodological observation of the paper: \emph{the move from
two-view stereo to multi-view SfM reduces the reconstruction ambiguity
from a general projective transformation to an orientation-preserving
similarity transformation, which is what makes OPS possible in three
dimensions.} Note that projective shape, being invariant under the larger
projective group, is unaffected by this change and remains valid for both
pipelines.

\section{Object, Data, and the SfM Pipeline}
\label{sec:data}

\subsection{Object and the Blueprint}
\label{sec:object}

The object under consideration is the three-cube polyhedron of
\citet{patrangenaru2010}, shown in Figure~\ref{fig:object}. Here, a top
$4\times4\times4$ cube, a middle $6\times6\times6$ cube, and a bottom
$10\times10\times10$ cube, are stacked coaxially so that their vertical
axes coincide. The
\emph{blueprint}, in this setting, is the exact Euclidean design of this
object, meaning the ideal coordinates the manufactured object would have
if it matched the
specification perfectly. It plays the role of the null configuration in
the test (\ref{eq:H-bp}). We ask whether the projective shape recovered
from the photographs is consistent with the projective shape of this
design.

This object has $k=19$ landmarks, which are taken to be its visible
vertices, and
the blueprint assigns each landmark an exact Euclidean coordinate
$(X,Y,Z)$, with $X$ being the measurement on the $x$-axis, $Y$ being the
measurement on the $y$-axis, and $Z$ being the measurement on the
$z$-axis, in the
object's own units. The full blueprint is given in
Table~\ref{tab:blueprint} and is shown, with the projective frame, in
Figure~\ref{fig:object}. The five frame landmarks $\{8,12,17,18,19\}$ are
chosen, as in \citet{patrangenaru2010}, to be in general position; the
remaining $q=14$ landmarks are those whose projective coordinates are
tested against the blueprint. Throughout the paper, the $q=14$ non-frame landmarks keep their original
landmark labels $1$--$7$, $9$--$11$, and $13$--$16$. Thus, landmark
labels in tables and figures extend to $16$, although only $14$ landmarks
are analyzed. This preserves the labeling used in
Figure~\ref{fig:object} and in \citet{patrangenaru2010}.

\begin{table}[htbp]
\centering
\caption{The blueprint: exact Euclidean coordinates $(X,Y,Z)$ of the
$19$ landmarks of the three-cube object, in object units (after
\citet{patrangenaru2010}). Frame
landmarks $\{8,12,17,18,19\}$ are denoted by $^\dagger$. The projective
blueprint coordinates $\gamma_s$ used in the test are obtained by
applying the map (\ref{eq:proj-coord}) to these points
(Remark~\ref{rem:blueprint}).}
\label{tab:blueprint}
\small
\begin{tabular}{rrrr|rrrr}
\toprule
LM & $X$ & $Y$ & $Z$ & LM & $X$ & $Y$ & $Z$\\
\midrule
 1 & 0 & 20 & 0  & 11 & 0 & 10 & 6\\
 2 & 0 & 20 & 4  & 12$^\dagger$ & 6 & 10 & 6\\
 3 & 4 & 20 & 4  & 13 & 6 & 10 & 0\\
 4 & 4 & 20 & 0  & 14 & 0 & 10 & 10\\
 5 & 0 & 16 & 4  & 15 & 10 & 10 & 10\\
 6 & 4 & 16 & 4  & 16 & 10 & 10 & 0\\
 7 & 4 & 16 & 0  & 17$^\dagger$ & 0 & 0 & 10\\
 8$^\dagger$ & 0 & 16 & 6 & 18$^\dagger$ & 10 & 0 & 10\\
 9 & 6 & 16 & 6  & 19$^\dagger$ & 10 & 0 & 0\\
10 & 6 & 16 & 0  &    &   &    &  \\
\bottomrule
\end{tabular}
\end{table}

\begin{figure}[htbp]
\centering
\includegraphics[width=\textwidth]{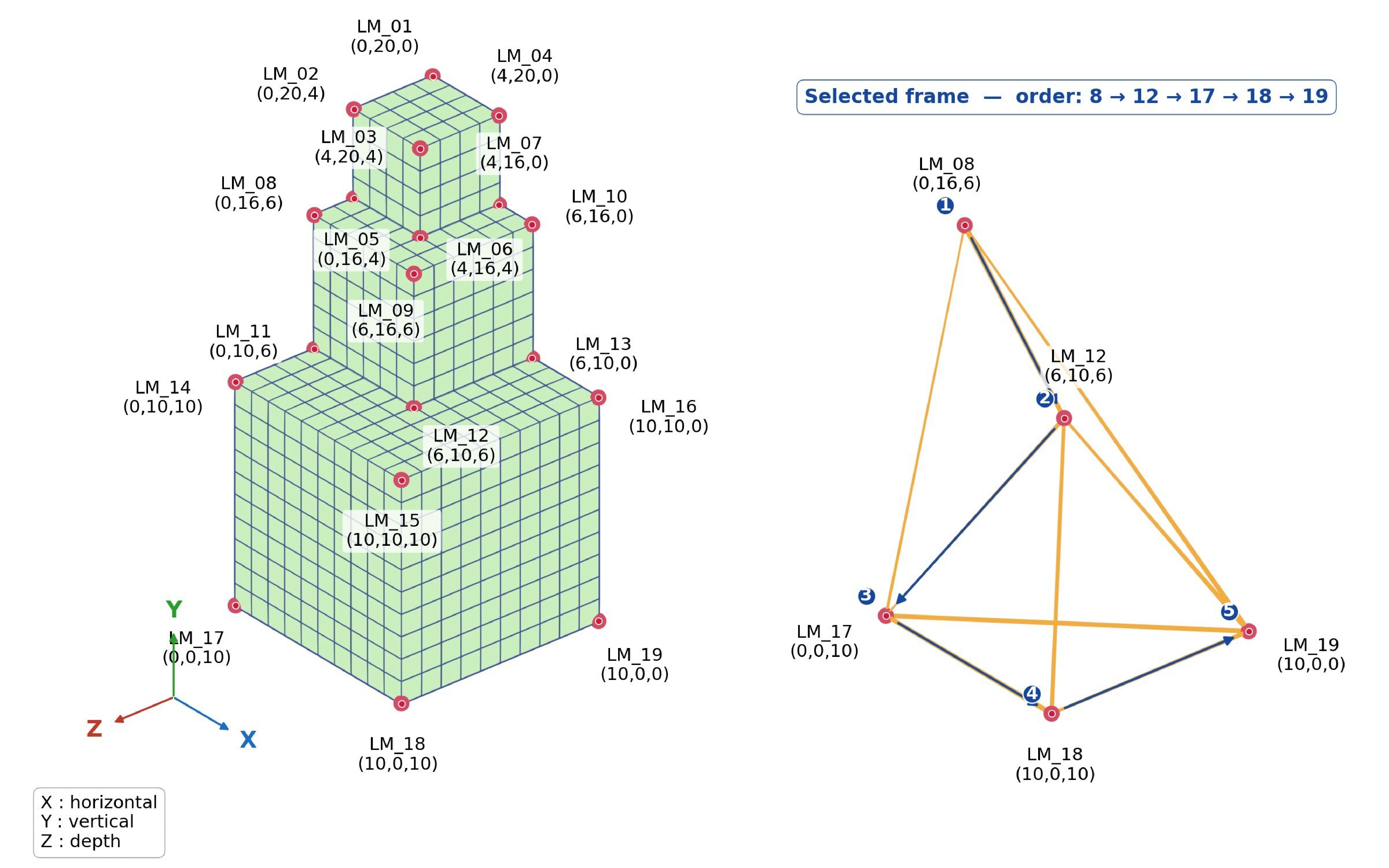}
\caption{\textbf{Left:} the three-cube object with the $19$ landmarks and
the Euclidean blueprint coordinates of Table~\ref{tab:blueprint} ($X$
horizontal, $Y$ vertical, $Z$ depth). \textbf{Right:} the oriented
projective frame $F=\{8,12,17,18,19\}$ in the order used to build $U_3$
and the normalizing point $p_{19}$. Both panels are rendered from the
exact blueprint coordinates of Table~\ref{tab:blueprint} in Blender~5.0
\citep{blender2025}, for legibility; they are not reproduced from
\citet{patrangenaru2010}.}
\label{fig:object}
\end{figure}

\subsection{Acquisition and Reconstruction}

We used Agisoft Metashape Professional~2.3.0 \citep{agisoft2025} to
perform SfM. Our
object was placed on a turntable and photographed with a fixed
camera in manual-focus mode, with the zoom ring fixed. The static
background
was masked and the option \texttt{Exclude stationary tie points} was
enabled, so that tie points on the surrounding scene do not influence
the alignment. Photo alignment was performed at \texttt{High} accuracy
with adaptive
camera-model fitting. Here, the dense point cloud was built at
\texttt{High}
quality with moderate depth filtering. Landmarks were designated as
Agisoft markers and adjusted across several source photographs, so each marker's
3D location was obtained by ray triangulation rather than by projection
onto an interpolated mesh. Figure~\ref{fig:cloud} shows a representative
dense point cloud and the reconstructed model surface.

\begin{figure}[htbp]
\centering
\begin{subfigure}[b]{0.58\textwidth}
  \centering
  \includegraphics[height=4.8cm]{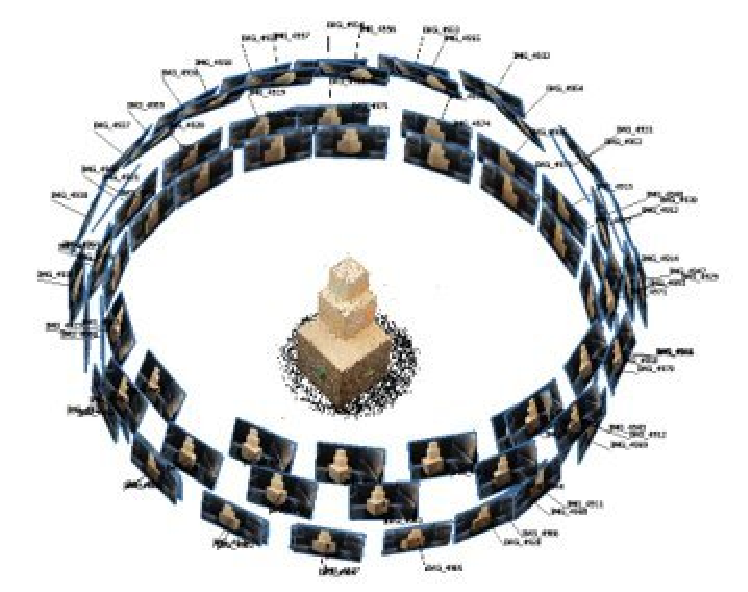}
  \caption{Camera alignment and sparse cloud}
  \label{fig:cloud-align}
\end{subfigure}
\hfill
\begin{subfigure}[b]{0.38\textwidth}
  \centering
  \includegraphics[height=4.8cm]{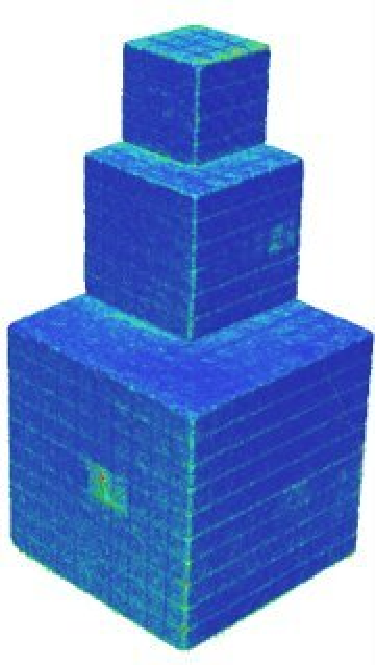}
  \caption{Dense cloud, point confidence}
  \label{fig:cloud-conf}
\end{subfigure}
\caption{Representative SfM output from Agisoft Metashape
Professional~2.3.0. \textbf{(a)} The estimated camera positions
surrounding the object after multi-view photo alignment, with the sparse
tie-point cloud at the center. \textbf{(b)} The dense point cloud of the
object, colored by point confidence (blue high, green lower), used to
assess reconstruction quality before marker placement.}
\label{fig:cloud}
\end{figure}

We acquired $n=8$ independent reconstructions, denoted $M_1,\dots,M_8$,
each from $64$ to $114$ photographs (Table~\ref{tab:models}). The full
marker coordinates are listed in Section~\ref{app:coords}, and the eight
reconstructions are shown in Figure~\ref{fig:scenes}.

\begin{table}[htbp]
\centering
\caption{The eight reconstructions and the number of source photographs
used to build each.}
\label{tab:models}
\small
\begin{tabular}{lcccccccc}
\toprule
Model & $M_1$ & $M_2$ & $M_3$ & $M_4$ & $M_5$ & $M_6$ & $M_7$ & $M_8$\\
Photographs & 64 & 95 & 76 & 109 & 77 & 89 & 84 & 114\\
\bottomrule
\end{tabular}
\end{table}

\begin{figure}[htbp]
\centering
\includegraphics[width=\textwidth]{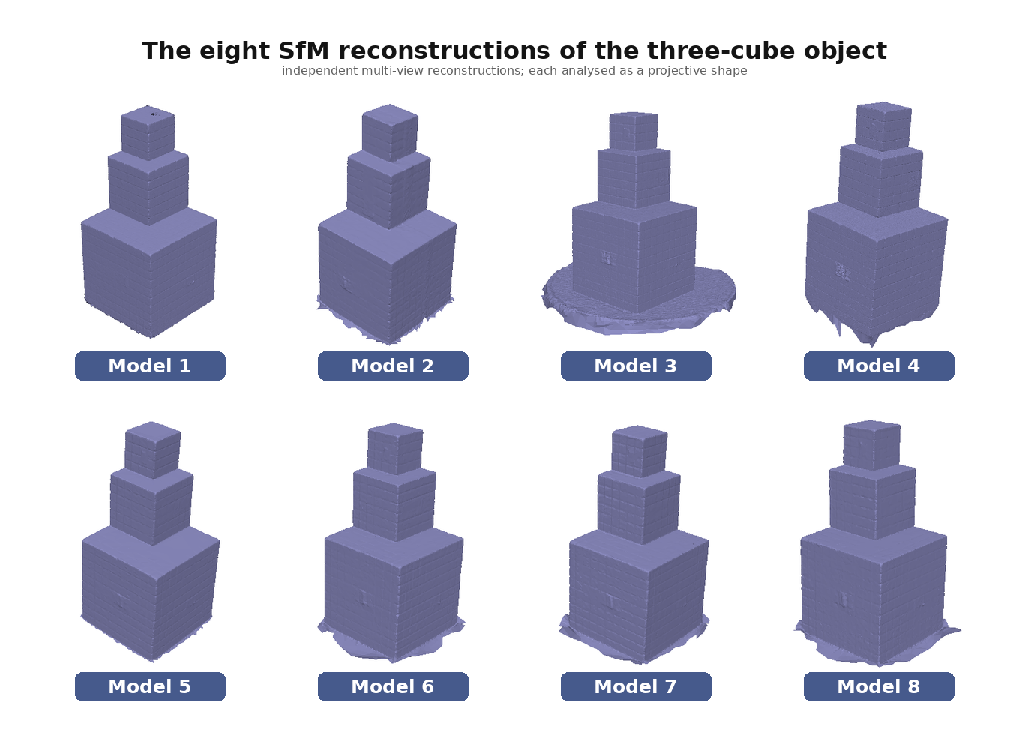}
\caption{The eight independent SfM reconstructions of the three-cube
object, arranged in two rows of four as in
\citet[Fig.~4]{patrangenaru2010}. Each was built from a different
photograph set ($64$ to $114$ images). The panels labelled
``Model~$k$'' correspond to $M_k$ in the text; the bottom face shows the
reconstructed ground plane and minor boundary artifacts of the dense
surface, which do not involve the $19$ landmarks.}
\label{fig:scenes}
\end{figure}

\subsection{Frame Quality and Orientation}

Before proceeding with shape inference we verify that the frame is in
general position by
computing $\det(U_3)$ and the condition number $\kappa(U_3)$, and that
orientation is consistent across reconstructions by checking the sign of
$\det(U_3)$ and of each component of $v_5$. Table~\ref{tab:frame} reports
these diagnostics: all eight reconstructions share the same orientation
($\operatorname{sign}\det U_3=+1$ and an identical sign pattern of
$v_5$) and are numerically well conditioned ($\kappa(U_3)<280$,
$|\det U_3|\ge0.29$).

\begin{table}[htbp]
\centering
\caption{Frame quality and orientation diagnostics. All reconstructions
share orientation and are well conditioned, so both PS and OPS are
well posed.}
\label{tab:frame}
\begin{tabular}{lrrcl}
\toprule
Sample & $\det(U_3)$ & $\kappa(U_3)$ & $\operatorname{sign}(v_5)$ &
Status\\
\midrule
$M_1$ &  0.291 & 122.6 & $(-,+,+,-)$ & OK\\
$M_2$ &  6.730 & 118.3 & $(-,+,+,-)$ & OK\\
$M_3$ &  1.551 &  96.0 & $(-,+,+,-)$ & OK\\
$M_4$ & 32.991 & 229.5 & $(-,+,+,-)$ & OK\\
$M_5$ &  4.187 & 187.4 & $(-,+,+,-)$ & OK\\
$M_6$ &  7.923 & 277.1 & $(-,+,+,-)$ & OK\\
$M_7$ & 10.373 & 161.7 & $(-,+,+,-)$ & OK\\
$M_8$ &  2.688 & 107.2 & $(-,+,+,-)$ & OK\\
\bottomrule
\end{tabular}
\end{table}

\subsection{Caveats for Photogrammetric Pipelines}

Three practical issues affect any photogrammetric pipeline that could be
used for this analysis. First, packages that merge chunks reconstructed from disjoint
photo sets introduce an extra similarity at the merge step; since three
of the five frame landmarks (17, 18, 19) lie on the bottom face of the
largest cube, an inadvertent merge could corrupt the frame. Second,
aggressive surface interpolation causes markers to be placed on a
rendered mesh to inherit interpolation error; interactive placement across source
photographs avoids this. Third, ground-plane or scale-bar calibration may
differ between sessions, but since projective shape is invariant under
any global projective transformation, this part of the pipeline will not
affect our subsequent analysis.

\section{Projective Shape Analysis}
\label{sec:ps}

\subsection{The Blueprint Test}

For landmark $s$, let $X^s_r$ be the sign-blind unit projective
coordinate (\ref{eq:proj-coord}) of reconstruction $r$. The sample VW
second-moment matrix is $J_s=n^{-1}\sum_r X^s_r(X^s_r)^\top$, with
largest eigenvalue $\hat{\lambda}_1^{(s)}$ and eigenvector
$x
)$ (the VW extrinsic sample mean axis). The PS sample extrinsic total-variance
index is $tS_n^{(s),\PS}=2(1-\hat{\lambda}_1^{(s)})$
\citep{bhattacharyaPatrangenaru2005}.

We test, for landmark $s$, whether the mean VW projective shape coordinate equals the
corresponding blueprint coordinate:
\begin{equation}
\label{eq:H-bp}
  H_0^{\text{(bp)}}:\ \mu_s=\gamma_s
  \qquad\text{vs.}\qquad
  H_a^{\text{(bp)}}:\ \mu_s\neq\gamma_s,
\end{equation}
where $\gamma_s$ is the blueprint projective coordinate. Following
\citet[Cor.~4.5]{patrangenaru2010}, the test statistic has a
Hotelling-type quadratic form
\begin{equation}
\label{eq:Ts}
  T_s=n\,\gamma_s^\top D_s\,G_s^{-1}\,D_s^\top\gamma_s,
\end{equation}
where $D_s$ is a tangent basis at $g_s(M)$ and $G_s$ is the $3\times3$
extrinsic sample covariance \citep[eq.~4.13]{patrangenaru2010}, written
$G_{s,n}$ therein. We suppress the sample-size subscript in the present
work. Because
the sample size ($n=8$) is small and equal to that of
\citet{patrangenaru2010} and far below the $n=41$ of
\citet{alamoudi2025}, we use the pivotal bootstrap of
\citet[Cor.~4.5]{patrangenaru2010} rather than a large-sample
chi-square approximation, with $B=20{,}000$ resamples and the
conservative $p$-value $p=(1+\#\{T_b^*\ge T_s\})/(B+1)$. Testing $q=14$
landmarks jointly, we control the familywise error by the Bonferroni
correction at level $\alpha=0.05$, via permutations giving a per-landmark cutoff
$\alpha/q\approx0.00357$, so that $H_0^{\text{(bp)}}$ is rejected when
$p<\alpha/q$ (see \cite{DaHi1997}, \cite{PhSm2010}).

\begin{remark}[Same landmarks are used in the projective frame for the blueprint and for the sample observations]
\label{rem:blueprint}
A projective shape is invariant under a projective transformation of the
configuration; it is \emph{not} invariant under a change of projective
frame. The identification of the shape space with $(\RP^m)^{k-m-2}$ is
made \emph{through} a chosen frame
\citep[\S3.5]{patrangenaruEllingson2015}, so the numerical coordinates,
and every statistic computed from them, depend on which landmarks form
the frame, on the order in which they are assigned to the columns of
$U_3$, on the labeling of the coordinate axes, on the choice of
normalizing point, and on the sign convention. Blueprint coordinates are
therefore comparable with sample coordinates only when both are built the
same way. We accordingly compute $\gamma_s$ from the
Euclidean blueprint using the same frame and conventions applied to the
SfM reconstructions, rather than importing a coordinate table produced
under a different convention. The practical importance of this point is
demonstrated in Section~\ref{sec:reproduce}.
\end{remark}

\subsection{Comparison with results in Patrangenaru et al.\ (2010)}
\label{sec:reproduce}

Before applying the test to new data, we validate our methodology with
the original data from \citet{patrangenaru2010}. Using their eight stereo
reconstructions and their published blueprint projective coordinates,
together with the frame ordering $(17,18,19,12,8)$, our results for the
extrinsic sample mean, were close to theirs with an average angular error of $3.5^\circ$ (with
maximum angular error of
$8.1^\circ$), consistent with the two-decimal rounding of their published
coordinate tables. The selection order $(8,12,17,18,19)$ stated in their
text, or in \citet[Ch.~22]{patrangenaruEllingson2015},
instead gives an average error of $62^\circ$, so the published sample
mean is recovered only under the former ordering. Under it, we obtained the same 
qualitative conclusion as theirs: $H_0^{\text{(bp)}}$ is not rejected for
any of the $14$ landmarks. The statistic $T_s$ agrees with their
published values up to the two-decimal rounding of their coordinate
tables and the sensitivity of covariance matrix inversion in
(\ref{eq:Ts}). The full comparison is given in
Appendix~\ref{app:repro}. This reproduction confirms that the
implementation used for the SfM data is correct, and it concretely
illustrates Remark~\ref{rem:blueprint}, namely that the correct frame
ordering had to
be recovered before the published sample mean could be matched.

\subsection{Blueprint Test on the SfM Data}
\label{sec:ps-results}

When the same test is applied to the $n=8$ SfM reconstructions, the
blueprint
hypothesis is not rejected at the nominal $5\%$ level for $9$ of $14$
landmarks, and the smallest $p$-value over all landmarks is $0.033$
(LM~15), which exceeds the Bonferroni cutoff $0.00357$. After the
Bonferroni correction, $H_0^{\text{(bp)}}$ is therefore not rejected for
any landmark ($0/14$), which is in agreement with
\citet[Sec.~5.1]{patrangenaru2010}. Per-landmark statistics are reported
in Table~\ref{tab:results}.

\begin{table}[htbp]
\centering
\caption{Per-landmark PS and OPS results on the $n=8$ SfM data. The
Bonferroni cutoff for the blueprint test is $\alpha/q=0.00357$; no
$p$-value falls below it. The OPS columns are discussed in
Section~\ref{sec:ops}.}
\label{tab:results}
\small
\begin{tabular}{rrrrrr}
\toprule
LM & $tS_n^{\PS}$ & $tS_n^{\OPS}$ &
   $\widehat{\mathrm{SE}}_{\OPS}$ & $T_s$ & $p$\\
\midrule
 1 & 0.000175 & 0.000087 & 0.000034 &  336.9 & 0.057\\
 2 & 0.001112 & 0.000556 & 0.000250 &  307.4 & 0.052\\
 3 & 0.000759 & 0.000379 & 0.000097 &   57.9 & 0.185\\
 4 & 0.000481 & 0.000241 & 0.000106 &  213.0 & 0.040\\
 5 & 0.000229 & 0.000114 & 0.000060 &  868.6 & 0.054\\
 6 & 0.001412 & 0.000706 & 0.000208 &  440.4 & 0.056\\
 7 & 0.000067 & 0.000033 & 0.000010 &  505.5 & 0.057\\
 9 & 0.000184 & 0.000092 & 0.000028 &  370.4 & 0.078\\
10 & 0.000162 & 0.000081 & 0.000032 &  217.9 & 0.062\\
11 & 0.000503 & 0.000252 & 0.000107 &  287.2 & 0.048\\
13 & 0.000014 & 0.000007 & 0.000002 &  324.1 & 0.037\\
14 & 0.000658 & 0.000329 & 0.000156 &  389.3 & 0.046\\
15 & 0.000064 & 0.000032 & 0.000009 & 1361.5 & 0.033\\
16 & 0.000097 & 0.000049 & 0.000018 &   64.0 & 0.101\\
\bottomrule
\end{tabular}
\end{table}

\subsection{Veronese--Whitney Sample Means and the Blueprint}
\label{sec:vwmeans}

The blueprint test of Section~\ref{sec:ps-results} considers the
dispersion
index and the test decision. In this section, we consider the underlying
Veronese--Whitney (VW) extrinsic sample \emph{means} themselves, since the mean
projective configuration is the natural object to compare against the
blueprint. For each landmark the VW sample mean is the leading eigenvector
$g_s(M)$ of $J_s$, a unit axis in $\RP^3$ reported as a $4$-vector with
its sign aligned to the blueprint for readability. Table~\ref{tab:vwmeans}
lists the $14$ VW sample means together with the leading eigenvalue
$\lambda_1^{(s)}$ and the angular distance of each sample mean from the
blueprint projective coordinate $\gamma_s$.

The VW sample means lie close to the blueprint: the average
angular distance is $2.52^\circ$ (minimum $0.44^\circ$ at landmark~11, maximum $7.19^\circ$
at landmark~6), and every leading eigenvalue exceeds $0.999$, indicating
that the eight reconstructions agree on each sample mean direction to within a
fraction of a degree. Figure~\ref{fig:vwmeans} shows the per-landmark
distances.

\begin{table}[htbp]
\centering
\caption{Veronese--Whitney extrinsic sample means of the $q=14$ non-frame
landmarks ($n=8$ SfM reconstructions). Each sample mean is a unit axis
$g_s(M)=[x,y,z,w]$ in $\RP^3$; $\lambda_1$ is the corresponding leading
eigenvalue of $J_s$ (closer to $1$ indicates tighter agreement); the last
column is the angular distance to the blueprint coordinate $\gamma_s$.}
\label{tab:vwmeans}
\small
\begin{tabular}{rcrr}
\toprule
LM & $g_s(M)=[x,\,y,\,z,\,w]$ & $\lambda_1$ & dist.\ to bp (deg)\\
\midrule
 1 & $[+0.192,+0.546,+0.335,+0.744]$ & $0.99991$ & $2.40$\\
 2 & $[+0.756,-0.339,+0.316,-0.462]$ & $0.99944$ & $4.80$\\
 3 & $[+0.489,-0.220,+0.844,+0.019]$ & $0.99962$ & $2.18$\\
 4 & $[+0.236,+0.663,+0.023,+0.711]$ & $0.99976$ & $4.09$\\
 5 & $[-0.132,+0.509,+0.491,+0.694]$ & $0.99989$ & $2.19$\\
 6 & $[-0.211,+0.762,-0.367,+0.490]$ & $0.99929$ & $7.19$\\
 7 & $[+0.338,+0.594,+0.300,+0.666]$ & $0.99997$ & $1.94$\\
 9 & $[+0.461,-0.000,+0.795,+0.395]$ & $0.99991$ & $0.78$\\
10 & $[+0.371,+0.644,+0.176,+0.645]$ & $0.99992$ & $2.24$\\
11 & $[+0.005,+0.312,+0.849,+0.425]$ & $0.99975$ & $0.44$\\
13 & $[+0.428,+0.542,+0.428,+0.583]$ & $0.99999$ & $0.52$\\
14 & $[+0.692,+0.425,-0.076,+0.579]$ & $0.99967$ & $4.37$\\
15 & $[+0.367,+0.224,+0.636,+0.640]$ & $0.99997$ & $1.16$\\
16 & $[+0.495,+0.622,+0.257,+0.550]$ & $0.99995$ & $1.00$\\
\bottomrule
\end{tabular}
\end{table}

\begin{figure}[htbp]
\centering
\includegraphics[width=0.72\textwidth]{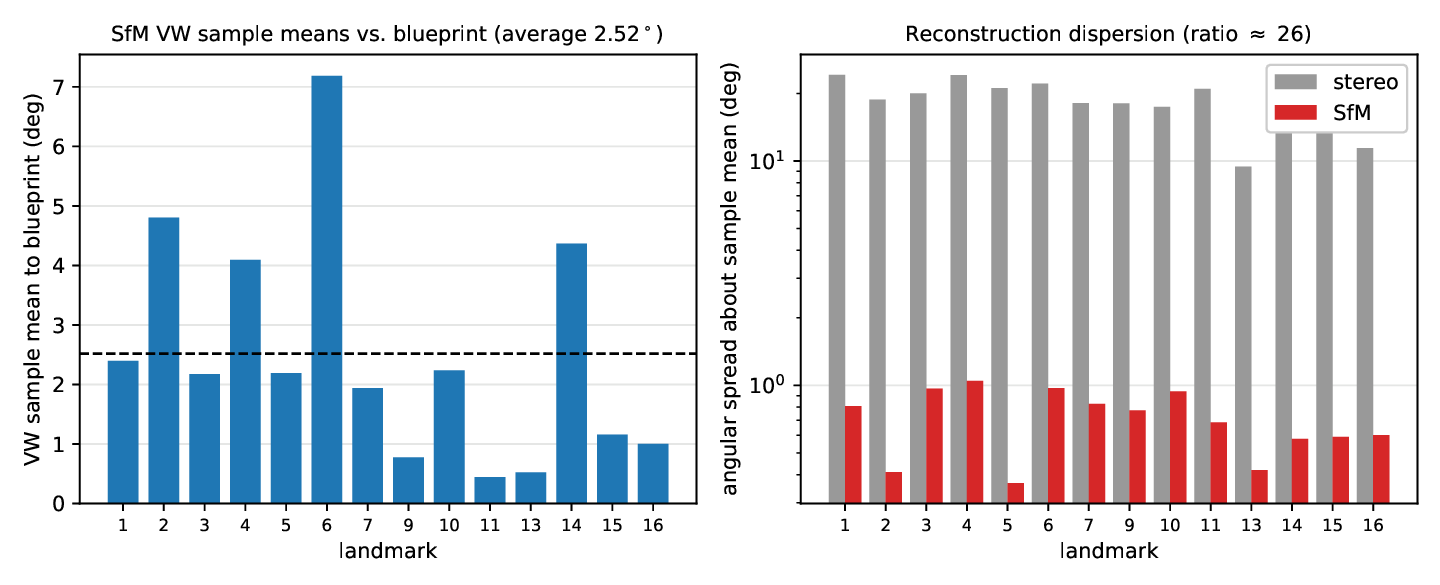}
\caption{\textbf{Left:} angular distance of each VW sample mean from the
blueprint (average angular distance $2.52^\circ$). \textbf{Right:} angular spread of the
eight reconstructions about their VW sample mean, for the multi-view SfM
pipeline (red) and the $2010$ stereo pipeline (gray), under the common
convention of Table~\ref{tab:compare}; the SfM reconstructions are about
$26$ times more concentrated.}
\label{fig:vwmeans}
\end{figure}

Because the SfM reconstruction leaves the global scale arbitrary, the
comparison of a recovered configuration against the blueprint is
meaningful only if the reconstructed cubes retain the blueprint
proportions. Table~\ref{tab:proportions} confirms that they do: the
scale-free edge ratios reproduce the blueprint $4\!:\!6\!:\!10$ to within
$2.4\%$ (middle-to-top) and $4.1\%$ (bottom-to-top), with small
between-reconstruction variation.

\begin{table}[htbp]
\centering
\caption{Scale-free cube-edge ratios of the eight reconstructions against
the blueprint proportions $4\!:\!6\!:\!10$. The reconstructed proportions
match the design, so the VW-sample-mean-versus-blueprint comparison is
well posed.}
\label{tab:proportions}
\small
\begin{tabular}{lcc}
\toprule
Reconstruction & middle/top & bottom/top\\
\midrule
$M_1$ & $1.486$ & $2.558$\\
$M_2$ & $1.474$ & $2.529$\\
$M_3$ & $1.551$ & $2.601$\\
$M_4$ & $1.560$ & $2.593$\\
$M_5$ & $1.554$ & $2.642$\\
$M_6$ & $1.559$ & $2.636$\\
$M_7$ & $1.547$ & $2.622$\\
$M_8$ & $1.555$ & $2.636$\\
\midrule
mean & $1.535$ & $2.602$\\
blueprint & $1.500$ & $2.500$\\
\bottomrule
\end{tabular}
\end{table}

\subsection{Fair Comparison: Stereo versus Multi-View SfM}
\label{sec:compare}

Both pipelines lead to the same blueprint decision. Therefore, an
informative comparison is not the decision but the \emph{precision} of
the recovered
shape, meaning how tightly the eight reconstructions agree. We define our
descriptive
measure of precision to be the average angular spread of each landmark's
projective coordinates about their sample mean direction, computed under
a single coordinate construction applied to both pipelines. We use the
frame
ordering $(17,18,19,12,8)$ under which the published values of
\citet{patrangenaru2010} are reproduced (Section~\ref{sec:reproduce}),
with blueprint-anchored signs. The angular spread
is dimensionless and unaffected by the global scale of the reconstruction.
It is not invariant under the
choice of frame, but applying the same construction to both pipelines
makes the two dispersions directly comparable.
There is a further caveat at the level of individual landmarks. The two
data sources label the landmarks of the same physical object
differently, so a common index selects physically different
corners in the two samples. Aggregate quantities are unaffected, being
computed within each internally consistent sample, but per-landmark
cross-pipeline pairings should be read as comparing dispersion
distributions rather than physically matched corners. This is because
absolute values of
$T_s$ are not comparable
across the two pipelines, because they live in different projective
representations. The angular spread avoids this.

Table~\ref{tab:compare} and Figure~\ref{fig:compare} report this measure.
Using the multi-view SfM pipeline, we recover the projective shape far
more precisely than the original stereo pipeline. The average angular spread
is $0.71^\circ$ against $18.5^\circ$, a reduction by a factor of about
$26$. The sign-blind axis--angle version gives $17.7^\circ$ and a factor
of about $25$, with the same blueprint conclusion. Computing the stereo
spread with raw per-reconstruction signs instead inflates it to
$24.9^\circ$. The excess reflects the orientation inconsistency of
two-view stereo (Section~\ref{sec:why-sfm}) rather than shape
dispersion, which is why the anchored construction is used for the
comparison. On the variance scale these angular factors correspond to
the total-variance ratios of roughly $700$ (PS) and $900$ (OPS). This
precision gain is the practical payoff of replacing two views by
$64$--$114$ views in a single global bundle adjustment.

\begin{table}[htbp]
\centering
\caption{Fair, scale-free comparison of the two acquisition pipelines on
the same object, computed under one construction applied to both
samples: the frame ordering $(17,18,19,12,8)$ under which the published
values of \citet{patrangenaru2010} are reproduced
(Section~\ref{sec:reproduce}) and blueprint-anchored signs.
``Stereo'' is the pipeline of
\citet{patrangenaru2010}; ``SfM'' is the multi-view pipeline used here.
The two sources label the landmarks differently, so the construction is
common at the level of indices, not of physical corners; the aggregate
quantities shown are unaffected. Total-variance indices are
dimensionless; the angular spread is in degrees.}
\label{tab:compare}
\small
\begin{tabular}{lccc}
\toprule
Quantity & Stereo & SfM (this work) & Ratio\\
\midrule
Photographs per reconstruction & $2$ & $64$--$114$ & ---\\
Average angular spread (deg) & $18.5$ & $0.71$ & $\sim\!26\times$\\
Average PS total-variance index & $0.291$ & $4.2\times10^{-4}$ & $\sim\!700\times$\\
Average OPS total-variance index (anchored signs) & $0.188$ & $2.1\times10^{-4}$ &
$\sim\!900\times$\\
Blueprint decision & fail to reject & fail to reject & same\\
\bottomrule
\end{tabular}
\end{table}

\begin{figure}[htbp]
\centering
\includegraphics[width=0.85\textwidth]{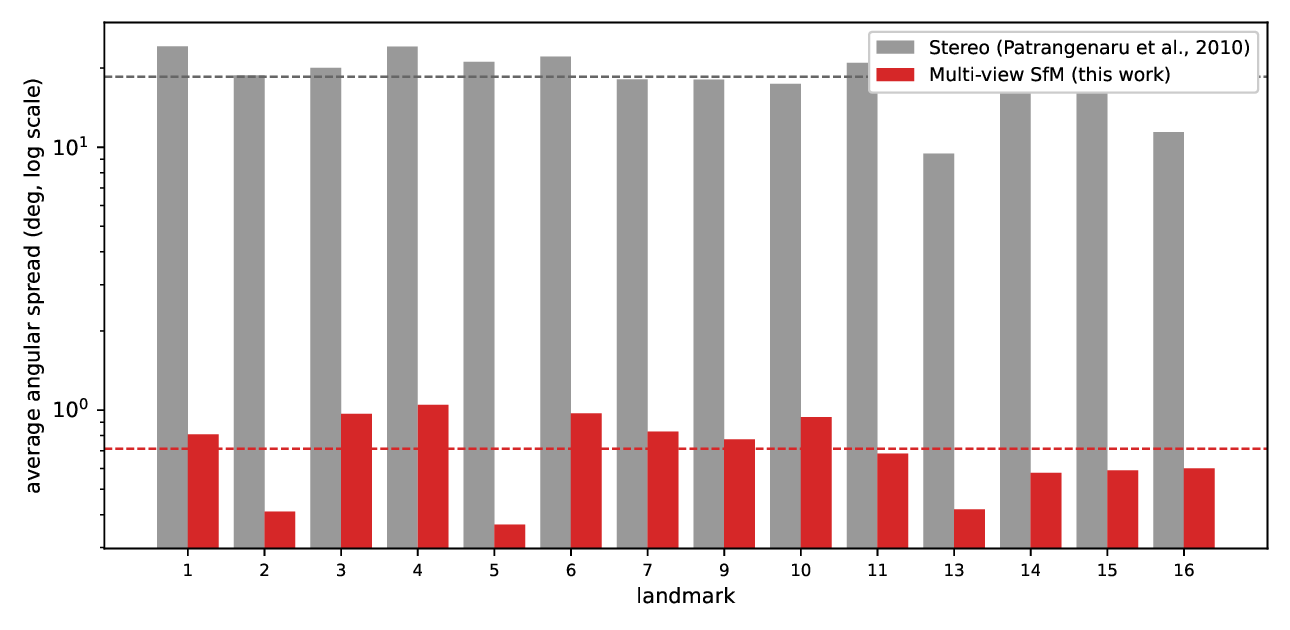}
\caption{Per-landmark reconstruction dispersion (average angular spread
about the sample mean direction, log scale) for the stereo pipeline of
\citet{patrangenaru2010} and the multi-view SfM pipeline used here,
computed under the common convention of Table~\ref{tab:compare}.
Landmark indices refer to each source's own labeling, which differ
between the two samples, so the per-landmark pairing is not a physical
correspondence. The SfM reconstructions are roughly $26$ times more
concentrated.}
\label{fig:compare}
\end{figure}

\section{Oriented Projective Shape Analysis}
\label{sec:ops}

Having established that the SfM reconstructions support the PS analysis,
we now use the property that distinguishes SfM from stereo, namely the
orientation-preserving similarity ambiguity reduction (Section~\ref{sec:why-sfm}),
to carry out oriented projective shape analysis.

\subsection{Oriented Index and Sign Convention}

For landmark $s$, let $Z^s_r$ be the \emph{signed} unit projective
coordinate, with the sign fixed so that $(Z^s_r)^\top\gamma_s\ge0$
for blueprint reference $\gamma_s$. Writing $\bar Z_s=n^{-1}
\sum_r Z^s_r$ and $R_n^{(s)}=\norm{\bar Z_s}$, the OPS extrinsic
total-variance index, introduced in the planar case by
\citet{alamoudi2025}, is
\begin{equation}
\label{eq:tSops}
  tS_n^{(s),\OPS}=2\bigl(1-R_n^{(s)}\bigr).
\end{equation}
A naive rule that makes the largest-magnitude component positive can
reverse
the sign of an entire vector when two components are nearly equal in
magnitude, hence artificially reducing $R_n$ and inflating $tS_n^{\OPS}$.
The
reference-anchored rule is stable under small perturbations because the
inner product with a fixed reference is continuous. For the SfM data
the two rules agree on every sample, so the reported numbers are
unchanged, but the reference-anchored rule is the robust default.

\subsection{Inference for the Oriented Index}

The delta-method standard error of $tS_n^{\OPS}$ is
\citep[eq.~12]{alamoudi2025}
\begin{equation}
\label{eq:SE}
  \widehat{\mathrm{SE}}_{\OPS}^{(s)}
  =\frac{2}{\sqrt n}\,
   \frac{\sqrt{\bar Z_s^\top S_n^{(s)}\bar Z_s}}{\norm{\bar Z_s}},
  \qquad
  S_n^{(s)}=\frac1n\sum_r(Z^s_r-\bar Z_s)(Z^s_r-\bar Z_s)^\top.
\end{equation}
In a test of whether the population OPS total variance is zero,
\begin{equation}
\label{eq:H-var}
  H_0^{\text{(var)}}:\ t\Sigma_{\OPS}^{(s)}=0
  \qquad\text{vs.}\qquad
  H_a^{\text{(var)}}:\ t\Sigma_{\OPS}^{(s)}>0,
\end{equation}
a one-sided alternative is appropriate because a total variance cannot be
negative.
$H_0^{\text{(var)}}$ is rejected when the lower limit of the $95\%$
confidence interval for $t\Sigma_{\OPS}^{(s)}$ exceeds zero. In what
follows, we report
both the delta-method interval from (\ref{eq:SE}) and the percentile
bootstrap interval ($B=20{,}000$).

\subsection{OPS Results}

The per-landmark OPS indices, standard errors, and confidence intervals
appear in Table~\ref{tab:results} and Figures~\ref{fig:bars}
and~\ref{fig:CIs}. The delta-method and bootstrap intervals agree
closely. $H_0^{\text{(var)}}$ is rejected for every landmark by both
intervals. These rejections are expected, since any measurement noise
makes
the population total variance strictly positive. As a result the
informative
quantity is not the binary decision but the \emph{magnitude} of
$tS_n^{\OPS}$. For the SfM data it is of order $10^{-4}$, which is three
orders of magnitude less than the Sope Creek data of
\citet{alamoudi2025}, with a benchmark of $1.87\times10^{-1}$.

\begin{figure}[htbp]
\centering
\includegraphics[width=0.92\textwidth]{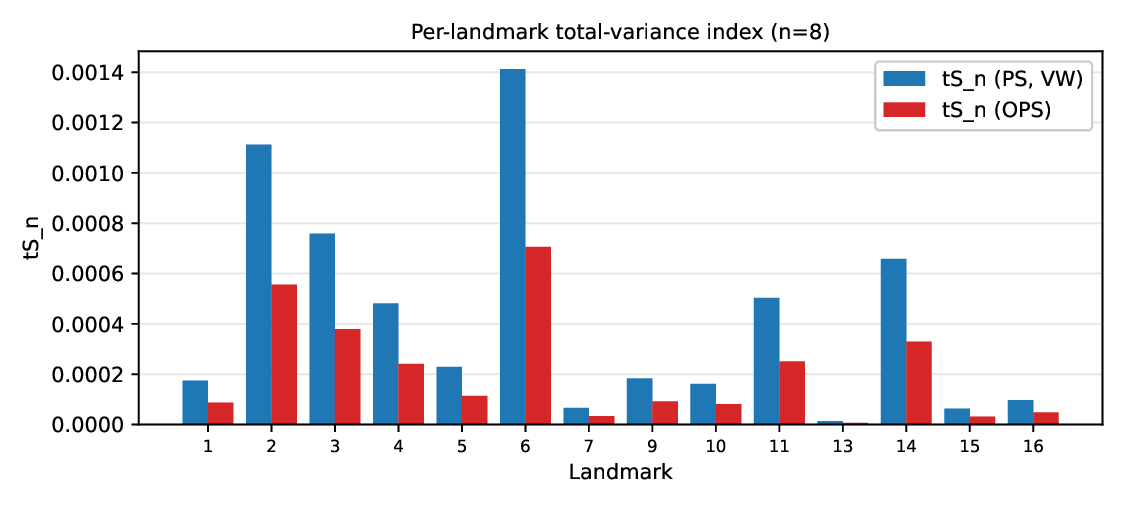}
\caption{Per-landmark PS and OPS total-variance indices, $n=8$. The OPS
index equals one-half of the PS index at every landmark; Section
\ref{sec:identity} shows this is an algebraic consequence of the high
data concentration, not an empirical property of the object.}
\label{fig:bars}
\end{figure}

\begin{figure}[htbp]
\centering
\includegraphics[width=0.92\textwidth]{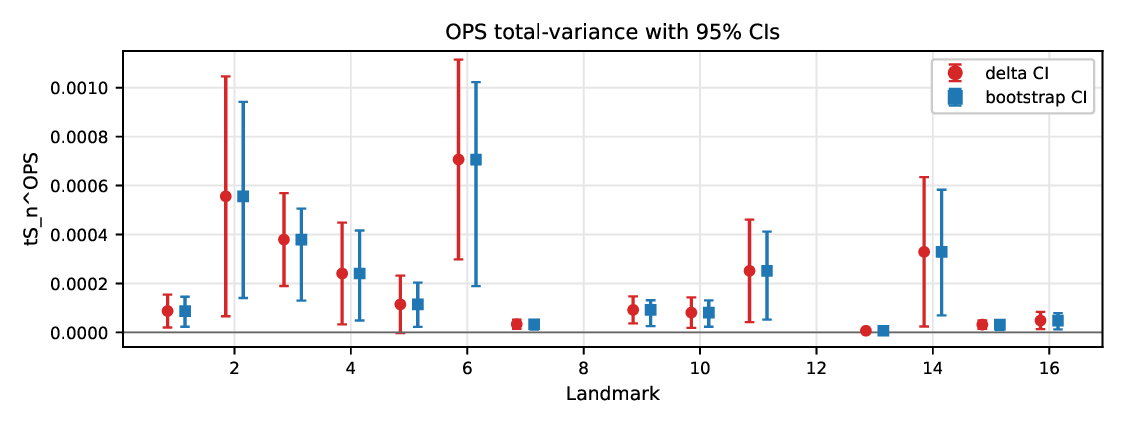}
\caption{OPS total variance with $95\%$ confidence intervals, $n=8$.
Delta-method intervals (circles) and percentile bootstrap intervals
(squares) agree closely. Landmarks~2 and~6 carry the largest dispersion;
landmark~13 the smallest.}
\label{fig:CIs}
\end{figure}

\subsection{The PS--OPS Relationship }
\label{sec:identity}

The average indices over the $q=14$ landmarks are
$\overline{tS_n^{\PS}}=4.2\times10^{-4}$ and
$\overline{tS_n^{\OPS}}=2.1\times10^{-4}$, and the per-landmark ratio
$tS_n^{\OPS}/tS_n^{\PS}$ equals $0.500$ to three decimal places at every
landmark point. We emphasize that this constant is not an empirical
finding
for our particular example but an algebraic identity in the presence of
high-concentration regimes, which we now make precise.

\begin{proposition}
\label{prop:identity}
Let $Z_1,\dots,Z_n\in\Sph^m$ with Euclidean sample mean $\bar Z$ and resultant
$R_n=\norm{\bar Z}$, and let $\lambda_1$ be the largest eigenvalue of
$J=n^{-1}\sum_r Z_rZ_r^\top$. Then $\lambda_1\ge R_n^2$ always, and as
the data concentrate ($R_n\to1$),
\[
  \lambda_1=R_n^2+\edit{O\!\bigl((1-R_n)^2\bigr)},\qquad
  \frac{tS_n^{\OPS}}{tS_n^{\PS}}
  =\frac{2(1-R_n)}{2(1-\lambda_1)}
  \edit{=\frac{1}{1+R_n}+O(1-R_n)}
  \longrightarrow\frac12 .
\]
\end{proposition}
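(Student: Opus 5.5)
The plan is to compare the spectrum of $J$ with the rank-one matrix built from the mean direction. Throughout, assume $R_n>0$ and put $u=\bar Z/R_n$ and $\varepsilon=1-R_n$.

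\emph{Lower bound.} By the Rayleigh quotient and Jensen's inequality,
\[
\lambda_1\ge u^\top J u=\frac1n\sum_r (u^\top Z_r)^2\ge\Bigl(\frac1n\sum_r u^\top Z_r\Bigr)^2=(u^\top\bar Z)^2=R_n^2 .
\]
This gives $\lambda_1\ge R_n^2$ with no concentration assumption.

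\emph{Decomposition.} Write $Z_r=a_r u+w_r$ with $w_r\perp u$. Then $a_r^2+\norm{w_r}^2=1$, $n^{-1}\sum_r a_r=R_n$ and $n^{-1}\sum_r w_r=0$. In an orthonormal basis adapted to $u$, $J$ has the block form
\[
J=\begin{pmatrix}\alpha & c^\top\\ c & W\end{pmatrix},\qquad
\alpha=\overline{a^2}=R_n^2+\overline{(a-R_n)^2},\quad
c=\overline{a\,w},\quad W=\overline{w w^\top},
\]
where overlines denote averages over $r$.

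\emph{Size of each block, for fixed $n$.}
\begin{itemize}
\item Each term $1-a_r\ge0$ and the terms average to $\varepsilon$, so $0\le 1-a_r\le n\varepsilon$ for every $r$.
\item Hence $\overline{(a-R_n)^2}\le\overline{(1-a)^2}\le n\varepsilon^2$.
\item Also $\norm{w_r}^2=1-a_r^2\le 2(1-a_r)\le 2n\varepsilon$.
\item Since $\bar w=0$, $c=\overline{(a-1)w}$, so $\norm{c}\le \varepsilon\cdot\max_r\norm{w_r}=O(\varepsilon^{3/2})$.
\item $\norm{W}\le \overline{\norm{w}^2}=1-\alpha\le 2\varepsilon$.
\end{itemize}

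\emph{Upper bound and ratio.} The top eigenvalue of the block matrix is separated from $\norm{W}$ by a gap of at least $1-O(\varepsilon)$. A standard Schur-complement / second-order perturbation bound then gives
\[
\lambda_1\le \alpha+\frac{\norm{c}^2}{\alpha-\norm{W}}=R_n^2+O(\varepsilon^2)+O(\varepsilon^3).
\]
Together with the lower bound this yields $\lambda_1=R_n^2+O((1-R_n)^2)$. For the ratio, $1-\lambda_1=(1-R_n)(1+R_n)-O(\varepsilon^2)$. Provided the $Z_r$ are not all equal, so that both indices are positive,
\[
\frac{2(1-R_n)}{2(1-\lambda_1)}=\frac{1}{1+R_n-O(\varepsilon)}=\frac{1}{1+R_n}+O(1-R_n)\longrightarrow\frac12 .
\]

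\emph{Main obstacle.} The hard step is controlling $\overline{(a-R_n)^2}$ by $\varepsilon^2$ rather than by $\varepsilon$. Without a bound on the sample size, a single outlier could make this term only $O(\varepsilon)$. The elementwise bound $1-a_r\le n\varepsilon$ settles it, but only with constants depending on $n$. I would therefore state the $O(\cdot)$ terms as holding for fixed $n$ (here $n=8$) as $R_n\to1$, and remark that they are not uniform in $n$.
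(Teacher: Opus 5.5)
Your proposal is correct and is essentially the paper's argument. You use the same Rayleigh--Jensen lower bound and the same split of $J$ along $\bar Z/R_n$: a parallel-variance term (your $\overline{(a-R_n)^2}$ is exactly the paper's $\bar Z^\top E\bar Z/R_n^2$) plus an off-diagonal tilt term divided by a spectral gap. You make this rigorous through the bounds $1-a_r\le n\varepsilon$, $\norm{c}=O(\varepsilon^{3/2})$ and the Schur-complement inequality.

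Your caveat that the constants depend on $n$ is a genuine sharpening that the paper's sketch leaves implicit. Take $n-1$ copies of $v$ and one of $-v$: then $\lambda_1=1$ and $\lambda_1-R_n^2=(1-R_n)(1+R_n)$ while $1-R_n=2/n\to0$. So the $O((1-R_n)^2)$ claim holds only for fixed $n$, such as $n=8$ here.
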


\begin{proof}[Sketch]
Since $\bar Z^\top J\,\bar Z/\norm{\bar Z}^2
=n^{-1}\sum_r(Z_r^\top\bar Z)^2/R_n^2\ge R_n^2$ by Jensen's inequality
applied to $t\mapsto t^2$, the Rayleigh quotient gives
$\lambda_1\ge R_n^2$. Write $Z_r=\bar Z+\varepsilon_r$ with
$\sum_r\varepsilon_r=0$; then
$J=\bar Z\bar Z^\top+E$, \edit{where
$E=n^{-1}\sum_r\varepsilon_r\varepsilon_r^\top\succeq0$
and $\tr E=n^{-1}\sum_r\norm{\varepsilon_r}^2=1-R_n^2$. The first-order
shift of the leading eigenvalue along its eigenvector $\bar Z/R_n$ is
$\bar Z^\top E\,\bar Z/R_n^2
=n^{-1}\sum_r(\bar Z^\top\varepsilon_r)^2/R_n^2
=\operatorname{Var}\!\bigl(\bar Z^\top Z_r\bigr)/R_n^2$, the variance of the
\emph{parallel} components, which is $O\!\bigl((1-R_n)^2\bigr)$; the
second-order (tilt) correction is controlled by the off-diagonal block of
$E$ divided by the spectral gap $R_n^2$ and is likewise
$O\!\bigl((1-R_n)^2\bigr)$. Hence
$\lambda_1=R_n^2+O\!\bigl((1-R_n)^2\bigr)$, so}
$2(1-\lambda_1)=2(1-R_n^2)+O\!\bigl((1-R_n)^2\bigr)
=2(1-R_n)(1+R_n)+O\!\bigl((1-R_n)^2\bigr)$, and dividing by
$2(1-R_n)$ \edit{gives $tS_n^{\OPS}/tS_n^{\PS}=1/(1+R_n)+O(1-R_n)$, which
tends to $\tfrac12$ as $R_n\to1$.}
\end{proof}

On the present data $\lambda_1$ and $R_n^2$ agree to within $10^{-7}$ at
every landmark, so the two indices do not measure dispersion
independently. Therefore, PS and OPS total variance should not be
presented as
separate, mutually corroborating summaries. The genuine value of OPS is
therefore not a smaller index but the conceptual properties in
Table~\ref{tab:ps-ops}. Specifically, that it retains orientation;
that it detects a mirrored configuration---such a configuration fails
the oriented-frame admissibility condition of the OPS construction
\citep{choi2022} and is flagged by the frame-orientation sign
$\operatorname{sign}\det U_3$, the practical certificate under the fixed
affine-lift, similarity-gauge convention used here, while the
$(\Sph^3)^q$ coordinates themselves are reflection-invariant; and that
it requires no covariance inversion
in the test. These advantages are realized in the present study precisely
because the SfM pipeline
supplies the consistent orientation that OPS needs.

\begin{table}[htbp]
\centering
\caption{Conceptual comparison of PS and OPS. The two total-variance
indices coincide up to a factor of two under high concentration
(Proposition~\ref{prop:identity}); the operational differences lie in
orientation and applicability. The flagging of a mirrored configuration
operates through the oriented-frame admissibility condition, certified
in practice by $\operatorname{sign}\det U_3$ under the fixed
affine-lift, similarity-gauge convention used throughout; it is not
carried by the $(\Sph^m)^q$ coordinates, which are reflection-invariant.}
\label{tab:ps-ops}
\small
\begin{tabular}{lll}
\toprule
Property & PS (Veronese--Whitney) & OPS (oriented sphere)\\
\midrule
Shape space & $(\RP^m)^q$ & $(\Sph^m)^q$\\
Identifies $z$ with $-z$ & Yes & No\\
Embedding & $z\mapsto zz^\top$ & $z\hookrightarrow\R^{m+1}$\\
Consistent sign needed & No & Yes (from SfM)\\
Mirrored configuration & Accepted (same shape) & Flagged: inadmissible $k$-ad\\
Covariance inversion in test & Required & Not required\\
Total-variance index & $2(1-\lambda_1)$ & $2(1-R_n)$\\
\bottomrule
\end{tabular}
\end{table}

\FloatBarrier
\section{Robustness and Sensitivity}
\label{sec:robust}

We examine the stability of the blueprint conclusion across different
settings: (i) the set of reconstructions used; (ii) the number of
reconstructions and the number of photographs used in those
reconstructions; and (iii) the frame ordering.

\subsection{Leave-$k$-Out Robustness}

Computing the average OPS index on every leave-one-out and leave-two-out
subsample, in the spirit of \citet[Sec.~4.3]{alamoudi2025}, gives a
baseline value of $2.11\times10^{-4}$. The eight leave-one-out replicates range
over $[1.69,2.31]\times10^{-4}$ (maximum relative change $20\%$), and the
$28$ leave-two-out replicates over $[1.19,2.57]\times10^{-4}$ (maximum
relative change $44\%$). Every replicate stays below $3\times10^{-4}$. As a result the blueprint
conclusion is robust to the choice of reconstructions used.

\subsection{Sample-Size Sensitivity}

Table~\ref{tab:samplesize} and Figure~\ref{fig:samplesize} report the
average OPS index and standard error on the first $n$ reconstructions,
$n=4,\dots,8$. The analysis begins at $n=4$ because the blueprint test
(\ref{eq:Ts}) requires inverting the $3\times3$ extrinsic covariance
$G_s$, which is rank-deficient for $n\le m=3$, where $n=m+1=4$ is the
smallest
sample size at which the test is defined, and for which a single
reconstruction
gives $tS_n^{\OPS}\equiv0$ identically. The index decreases at a
decelerating rate as $n$ increases, and the standard error stabilizes
with $n=7$. The
blueprint hypothesis is never rejected at any sample size, so the failure to
reject the blueprint hypothesis is stable across sample sizes and is not
an artifact of the particular value of $n$.

\begin{table}[htbp]
\centering
\caption{Sample-size sensitivity (cumulative, first $n$ reconstructions).}
\label{tab:samplesize}
\begin{tabular}{rrrc}
\toprule
$n$ & $\overline{tS_n^{\OPS}}$ & $\overline{\mathrm{SE}}_{\OPS}$ &
blueprint reject (Bonf.)\\
\midrule
4 & $2.77\times10^{-4}$ & $1.08\times10^{-4}$ & $0/14$\\
5 & $2.51\times10^{-4}$ & $0.95\times10^{-4}$ & $0/14$\\
6 & $2.26\times10^{-4}$ & $0.84\times10^{-4}$ & $0/14$\\
7 & $2.15\times10^{-4}$ & $0.80\times10^{-4}$ & $0/14$\\
8 & $2.11\times10^{-4}$ & $0.80\times10^{-4}$ & $0/14$\\
\bottomrule
\end{tabular}
\end{table}

\begin{figure}[htbp]
\centering
\includegraphics[width=0.68\textwidth]{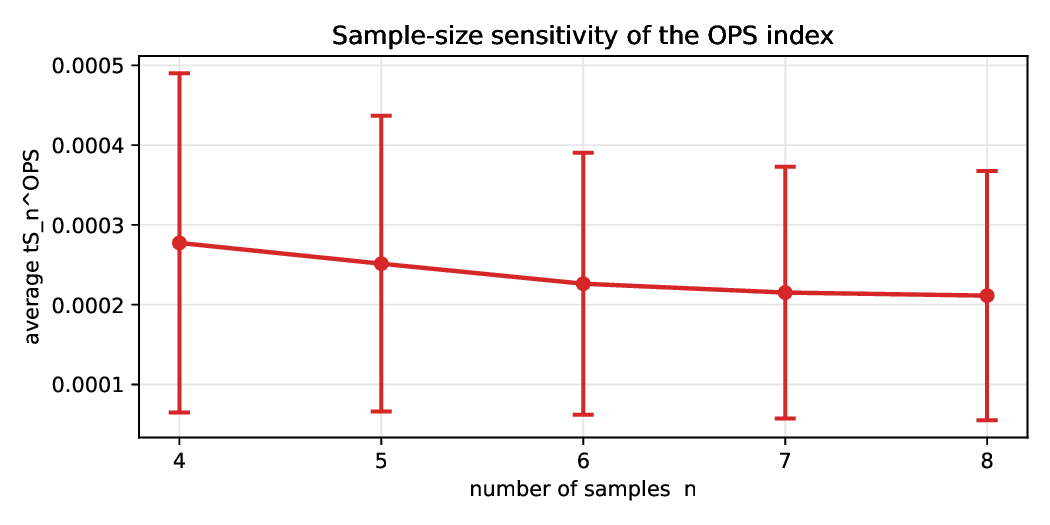}
\caption{Average OPS index with $95\%$ delta-method error bars, computed
cumulatively on the first $n$ reconstructions. The index stabilizes by
$n=7$--$8$.}
\label{fig:samplesize}
\end{figure}

\subsection{Effect of the Number of Photographs}

The eight reconstructions were generated from $64$ to $114$ photographs.
For
each, we computed the average angular deviation of its $14$ projective
coordinates from the blueprint, and correlated it with the photograph
count. The association is moderate and negative (Pearson $r=-0.68$,
$p=0.066$; Spearman $r=-0.55$, $p=0.16$). Here, more photographs tend to
yield
smaller deviations, but the trend does not reach significance at $5\%$
given $n=8$ (Figure~\ref{fig:photo}). Within the range examined,
reconstructions appear adequately stable at $\ge64$ photographs.

\begin{figure}[htbp]
\centering
\includegraphics[width=0.72\textwidth]{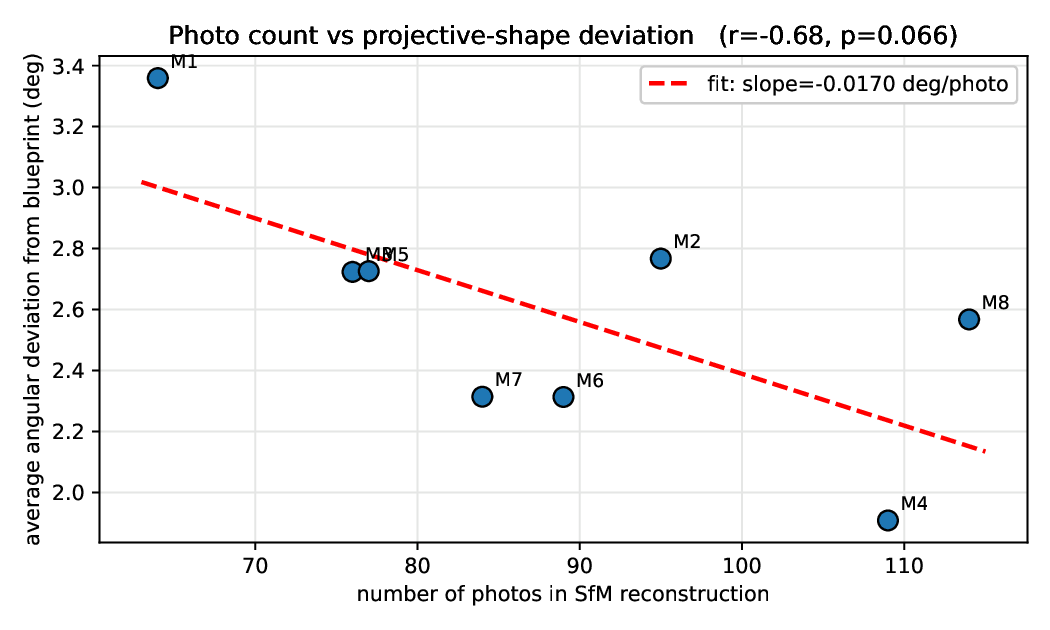}
\caption{Average angular deviation from the blueprint against the number of
photographs used in each reconstruction $M_1,\dots,M_8$. The dashed line
is the least-squares fit (slope $-0.017$ deg/photo; $r=-0.68$,
$p=0.066$).}
\label{fig:photo}
\end{figure}

\subsection{Sensitivity to the Labeling of Landmarks within a Frame}

The projective coordinates of a configuration depend on the projective
frame, including the order in which the five frame landmarks are assigned
to the columns of $U_3$ (Remark~\ref{rem:blueprint}). The blueprint
decision is therefore not invariant under that ordering by construction,
and its stability has to be checked rather than assumed. We checked it by
re-running the test under four orderings of $\{8,12,17,18,19\}$,
including $(17,18,19,12,8)$, the ordering under which the published values
of \citet{patrangenaru2010} are reproduced. Table~\ref{tab:framesens}
shows the Bonferroni
decision is zero for all fourteen landmarks under every ordering, and the
average OPS index varies by
less than $2\%$, so the reported conclusion is not an artifact of the
particular permutation of the frame points. This is an empirical
stability check on one fixed set of frame landmarks, not an invariance
property: Section~\ref{sec:reproduce} shows how strongly the numerical
representation itself can move with the ordering, and a different choice
of frame landmarks would give a different coordinate system again.

\begin{table}[htbp]
\centering
\caption{Sensitivity of the blueprint test to the frame ordering, for the
fixed frame set $\{8,12,17,18,19\}$. The
ordering $(17,18,19,12,8)$ is the one under which the published values
of \citet{patrangenaru2010} are reproduced (Section~\ref{sec:reproduce}).
The
decision is unchanged across these orderings; the numeric representation
is not.}
\label{tab:framesens}
\small
\begin{tabular}{lccc}
\toprule
Frame ordering & blueprint reject (Bonf.) & smallest $p$ &
$\overline{tS_n^{\OPS}}$\\
\midrule
$(8,12,17,18,19)$ (current) & $0/14$ & $0.033$ & $2.11\times10^{-4}$\\
$(17,18,19,12,8)$ (2010 study) & $0/14$ & $0.033$ & $2.08\times10^{-4}$\\
$(19,18,17,12,8)$           & $0/14$ & $0.033$ & $2.08\times10^{-4}$\\
$(12,17,18,19,8)$           & $0/14$ & $0.033$ & $2.08\times10^{-4}$\\
\bottomrule
\end{tabular}
\end{table}

\FloatBarrier
\section{Discussion}
\label{sec:discussion}

Our principal methodological contribution is the finding that
replacing two-view stereo by multi-view SfM upgrades the reconstruction
ambiguity from a general projective transformation to an
orientation-preserving similarity, which is exactly what makes oriented
projective shape applicable in three dimensions without any change to
the underlying statistical theory (Section~\ref{sec:why-sfm}).

Empirically, two points deserve emphasis. First, on the same object and
under the same coordinate construction, the SfM reconstructions are about
$26$ times more concentrated than the original stereo reconstructions
(Section~\ref{sec:compare}), corresponding to total-variance ratios of
roughly $700$ (PS) and $900$ (OPS). The higher precision
does not change the blueprint conclusion, but it makes the recovered
shape far more reproducible. Second, the apparent ``$0.5$ ratio'' between the OPS
and PS indices is an algebraic identity in a high-concentration regime
(Proposition~\ref{prop:identity}), not a property of the object.
Therefore, the
substantive distinction between the frameworks is conceptual
(Table~\ref{tab:ps-ops}).

Why analyze projective shape rather than similarity shape, given that SfM
supplies a metric reconstruction? Because (i) it keeps the analysis
directly
comparable with the projective shape approach of \citet{patrangenaru2010}
and \citet{alamoudi2025}; (ii) projective descriptors remain valid
under any residual projective distortion left by imperfect
self-calibration; and (iii) our aim is to establish that the SfM gauge
makes OPS \emph{definable} in three dimensions. Nonetheless, a
head-to-head
efficiency comparison with similarity shape is a worthwhile but separate
question.

We note three limitations. The sample size ($n=8$) remains
small. With a Bonferroni correction across $14$ landmarks the blueprint
test has limited power, so the failure to reject is best read as being
\emph{compatible with} the blueprint rather than positive
confirmation of it. Note that the smallest unadjusted $p$-value, $0.033$
at
landmark~15, would be significant before correction. The photograph-count
association would likewise benefit from more reconstructions.
The SfM gauge fixes residual scale and orientation choices that are
absorbed by the frame normalization rather than studied directly. And
at this sample size the pivotal bootstrap occasionally produces
near-singular resampled covariances for the PS method. The OPS method,
which does not require covariance inversion, is free of this difficulty
and is in
that sense numerically preferable.

Finally, the agreement of the blueprint test with
\citet{patrangenaru2010}, obtained with a different camera,
reconstruction pipeline, and software, supports the robustness of
projective-shape inference under reasonable changes to the acquisition
step, provided the projective frame is computed consistently
(Remark~\ref{rem:blueprint}). Our implementation reproduces the
fail-to-reject conclusions from \citet{patrangenaru2010} for all $14$
landmarks, based on their own
data (Section~\ref{app:repro}).

\section*{Data and Code Availability}
The reconstructed marker coordinates are listed in Section~\ref{app:coords}.
The Python codes used in this paper can be found at https://socrates.stat.fsu.edu/$\sim$vic/cubes.

\section*{Acknowledgements}
Alamoudi acknowledges support from the Stela Verenca Awarded from the Department of Statistics at Florida State University; Patrangenaru and Paige acknowledge support from awards NSF-DMS:2311059, respectively NSF-DMS:2311058 by the U.S. National Science Foundation .

\section{Reconstructed Marker Coordinates}
\label{app:coords}

Tables~\ref{tab:coords1} and~\ref{tab:coords2} give the full Euclidean
marker coordinates (in Agisoft model units) of all $19$ landmarks for the
eight reconstructions $M_1,\dots,M_8$, exported from Agisoft Metashape
Professional~2.3.0. The SfM reconstruction is defined only up to a global
similarity transformation, so these coordinates carry an arbitrary
overall scale. This
does not affect the projective shape, which is invariant under any global
projective transformation. These are the raw inputs to the
projective-coordinate map (\ref{eq:proj-coord}). The frame landmarks
$\{8,12,17,18,19\}$ are included for completeness.

\begin{table}[htbp]
\centering
\caption{Reconstructed Euclidean marker coordinates (Agisoft model units) for models $M_1$--$M_4$. Landmarks $1$--$19$; the frame is $F=\{8,12,17,18,19\}$.}
\label{tab:coords1}
\tiny
\setlength{\tabcolsep}{3pt}
\begin{tabular}{rrrrrrrrrrrrr}
\toprule
LM & \multicolumn{3}{c}{$M_1$} & \multicolumn{3}{c}{$M_2$} & \multicolumn{3}{c}{$M_3$} & \multicolumn{3}{c}{$M_4$}\\
 & $X$ & $Y$ & $Z$ & $X$ & $Y$ & $Z$ & $X$ & $Y$ & $Z$ & $X$ & $Y$ & $Z$\\
\midrule
1 & 0.095 & 0.843 & -2.677 & 1.139 & 0.410 & -6.804 & 0.879 & 2.436 & -8.575 & -1.244 & 4.063 & -14.478\\
2 & -0.212 & 0.577 & -2.590 & 0.260 & -0.396 & -6.901 & 0.166 & 2.458 & -8.581 & 0.525 & 4.036 & -13.583\\
3 & 0.056 & 0.277 & -2.530 & 1.052 & -1.260 & -6.942 & 0.158 & 2.386 & -7.860 & 1.433 & 4.091 & -15.414\\
4 & 0.359 & 0.539 & -2.612 & 1.931 & -0.466 & -6.839 & 0.878 & 2.364 & -7.853 & -0.368 & 4.125 & -16.295\\
5 & -0.235 & 0.473 & -2.992 & 0.316 & -0.306 & -8.067 & 0.146 & 1.738 & -8.653 & 0.531 & 2.038 & -13.641\\
6 & 0.039 & 0.172 & -2.931 & 1.101 & -1.185 & -8.118 & 0.129 & 1.671 & -7.929 & 1.454 & 2.100 & -15.465\\
7 & 0.339 & 0.431 & -3.016 & 1.984 & -0.378 & -8.014 & 0.845 & 1.651 & -7.935 & -0.344 & 2.149 & -16.340\\
8 & -0.398 & 0.340 & -2.965 & -0.121 & -0.716 & -8.175 & -0.231 & 1.724 & -8.659 & 1.433 & 1.952 & -13.166\\
9 & 0.010 & -0.114 & -2.873 & 1.066 & -2.019 & -8.231 & -0.247 & 1.618 & -7.558 & 2.840 & 2.049 & -15.918\\
10 & 0.480 & 0.280 & -2.997 & 2.394 & -0.801 & -8.077 & 0.848 & 1.590 & -7.555 & 0.114 & 2.087 & -17.280\\
11 & -0.432 & 0.187 & -3.574 & -0.050 & -0.564 & -9.974 & -0.245 & 0.647 & -8.760 & 1.426 & -1.075 & -13.288\\
12 & -0.023 & -0.268 & -3.480 & 1.151 & -1.860 & -10.036 & -0.272 & 0.540 & -7.673 & 2.838 & -0.968 & -15.986\\
13 & 0.449 & 0.125 & -3.605 & 2.462 & -0.654 & -9.857 & 0.815 & 0.509 & -7.654 & 0.128 & -0.922 & -17.399\\
14 & -0.755 & -0.091 & -3.516 & -0.920 & -1.373 & -10.178 & -1.004 & 0.624 & -8.760 & 3.309 & -1.217 & -12.355\\
15 & -0.051 & -0.862 & -3.352 & 1.066 & -3.580 & -10.261 & -1.024 & 0.461 & -6.930 & 5.630 & -1.015 & -16.940\\
16 & 0.735 & -0.195 & -3.566 & 3.295 & -1.574 & -9.951 & 0.813 & 0.395 & -6.910 & 1.060 & -0.994 & -19.279\\
17 & -0.816 & -0.348 & -4.489 & -0.809 & -1.120 & -13.047 & -1.073 & -1.119 & -8.936 & 3.356 & -6.175 & -12.527\\
18 & -0.114 & -1.132 & -4.341 & 1.229 & -3.354 & -13.161 & -1.105 & -1.324 & -7.092 & 5.725 & -6.056 & -17.089\\
19 & 0.681 & -0.456 & -4.552 & 3.481 & -1.322 & -12.876 & 0.727 & -1.397 & -7.072 & 1.176 & -6.042 & -19.416\\
\bottomrule
\end{tabular}
\end{table}

\begin{table}[htbp]
\centering
\caption{Reconstructed Euclidean marker coordinates (Agisoft model units) for models $M_5$--$M_8$.}
\label{tab:coords2}
\tiny
\setlength{\tabcolsep}{3pt}
\begin{tabular}{rrrrrrrrrrrrr}
\toprule
LM & \multicolumn{3}{c}{$M_5$} & \multicolumn{3}{c}{$M_6$} & \multicolumn{3}{c}{$M_7$} & \multicolumn{3}{c}{$M_8$}\\
 & $X$ & $Y$ & $Z$ & $X$ & $Y$ & $Z$ & $X$ & $Y$ & $Z$ & $X$ & $Y$ & $Z$\\
\midrule
1 & -0.567 & 0.978 & -7.074 & -0.467 & 1.907 & -7.641 & -1.726 & 1.108 & -11.060 & 0.413 & 1.879 & -4.576\\
2 & -0.586 & 0.004 & -7.075 & 0.607 & 2.225 & -8.113 & -0.398 & 1.094 & -10.998 & 1.264 & 1.900 & -4.501\\
3 & 0.405 & -0.016 & -7.136 & 0.043 & 2.793 & -9.056 & -0.336 & 1.697 & -12.200 & 1.335 & 2.087 & -5.356\\
4 & 0.428 & 0.972 & -7.149 & -1.059 & 2.469 & -8.605 & -1.681 & 1.699 & -12.278 & 0.468 & 2.056 & -5.437\\
5 & -0.660 & -0.006 & -8.072 & 0.648 & 1.169 & -8.751 & -0.373 & -0.123 & -11.588 & 1.300 & 1.063 & -4.669\\
6 & 0.344 & -0.037 & -8.142 & 0.087 & 1.729 & -9.713 & -0.293 & 0.476 & -12.817 & 1.381 & 1.240 & -5.528\\
7 & 0.352 & 0.955 & -8.133 & -0.999 & 1.407 & -9.231 & -1.643 & 0.499 & -12.866 & 0.521 & 1.217 & -5.591\\
8 & -0.666 & -0.529 & -8.106 & 1.228 & 1.281 & -9.030 & 0.346 & -0.179 & -11.583 & 1.766 & 1.046 & -4.647\\
9 & 0.844 & -0.563 & -8.203 & 0.364 & 2.140 & -10.481 & 0.436 & 0.733 & -13.424 & 1.868 & 1.315 & -5.944\\
10 & 0.868 & 0.941 & -8.210 & -1.299 & 1.662 & -9.753 & -1.608 & 0.756 & -13.516 & 0.559 & 1.274 & -6.053\\
11 & -0.770 & -0.543 & -9.589 & 1.276 & -0.307 & -9.991 & 0.363 & -1.999 & -12.478 & 1.825 & -0.226 & -4.905\\
12 & 0.737 & -0.573 & -9.692 & 0.423 & 0.541 & -11.442 & 0.472 & -1.095 & -14.314 & 1.927 & 0.036 & -6.192\\
13 & 0.766 & 0.939 & -9.713 & -1.255 & 0.043 & -10.731 & -1.591 & -1.083 & -14.423 & 0.610 & -0.014 & -6.308\\
14 & -0.797 & -1.565 & -9.645 & 2.424 & -0.053 & -10.517 & 1.755 & -2.096 & -12.434 & 2.728 & -0.248 & -4.850\\
15 & 1.739 & -1.619 & -9.803 & 0.978 & 1.395 & -12.945 & 1.936 & -0.550 & -15.532 & 2.885 & 0.205 & -7.019\\
16 & 1.799 & 0.910 & -9.843 & -1.850 & 0.565 & -11.764 & -1.512 & -0.530 & -15.729 & 0.682 & 0.122 & -7.209\\
17 & -0.973 & -1.625 & -12.065 & 2.550 & -2.616 & -12.083 & 1.863 & -5.045 & -13.890 & 2.875 & -2.308 & -5.249\\
18 & 1.591 & -1.708 & -12.274 & 1.114 & -1.219 & -14.583 & 2.064 & -3.571 & -17.043 & 3.035 & -1.890 & -7.457\\
19 & 1.652 & 0.867 & -12.311 & -1.751 & -2.053 & -13.388 & -1.423 & -3.547 & -17.238 & 0.806 & -1.991 & -7.654\\
\bottomrule
\end{tabular}
\end{table}

\section{Reproduction of Patrangenaru et al.\ (2010)}
\label{app:repro}

To validate our methodology, we applied it to the original data of
\citet{patrangenaru2010}, in particular the eight reconstructed
configurations of that
study (Table~5 therein) and the blueprint projective coordinates that are
reported
in Table~3 therein, using the frame ordering $(17,18,19,12,8)$, which
recovers the extrinsic sample mean of that study (Table~1 therein) to an
average angular error of $3.5^\circ$ (maximum $8.1^\circ$). Under the
selection order $(8,12,17,18,19)$ stated in the text of that study (and
repeated in \citet[Ch.~22]{patrangenaruEllingson2015}), the
average error is $62^\circ$, so the published values are recovered only
under $(17,18,19,12,8)$. Table~\ref{tab:repro} compares the
blueprint statistic obtained here, $T_s^{\text{here}}$, with the value
reported in that study, $T_s^{2010}$. The qualitative conclusion is
reproduced exactly: $H_0^{\text{(bp)}}$ is not rejected for any of the
$14$ landmarks. The residual numeric differences are consistent with the
two-decimal rounding of the published coordinate tables and with the
sensitivity of the covariance inversion in (\ref{eq:Ts}).

\begin{table}[htbp]
\centering
\caption{Reproduction of the blueprint test of
\citet{patrangenaru2010}. $T_s^{\text{here}}$ is computed by the present
implementation; $T_s^{2010}$ is the published value. Both lead to
``fail to reject $H_0^{\text{(bp)}}$'' for all $14$ landmarks.}
\label{tab:repro}
\small
\begin{tabular}{rcc|rcc}
\toprule
LM & $T_s^{\text{here}}$ & $T_s^{2010}$ & LM & $T_s^{\text{here}}$ &
$T_s^{2010}$\\
\midrule
1 & 4.03 & 3.03 &  9 & 5.89 & 1.57\\
2 & 3.59 & 2.65 & 10 & 5.32 & 5.05\\
3 & 3.35 & 0.15 & 11 & 1.85 & 13.74\\
4 & 5.27 & 3.74 & 13 & 2.20 & 2.04\\
5 & 2.60 & 2.62 & 14 & 3.75 & 3.95\\
6 & 2.30 & 1.79 & 15 & 3.94 & 3.70\\
7 & 2.84 & 3.94 & 16 & 3.15 & 2.87\\
\bottomrule
\end{tabular}
\end{table}

\clearpage
\bibliographystyle{plainnat}

\end{document}